\theoremstyle{thmstyleone}%
\newtheorem{theorem}{Theorem}
\newtheorem{proposition}[theorem]{Proposition}%
\theoremstyle{thmstyletwo}%
\theoremstyle{thmstylethree}%
\newcommand{\eg}{{\em e.g.,~}} 
\newcommand{\ie}{{\em i.e.,~}} 
\begin{document}

\title[Article Title]{Adaptive $k$NN graph model}


\author[1]{\fnm{Jiaye} \sur{Li}}\email{lijiaye@zju.edu.cn}

\author[2]{\fnm{Hang} \sur{Xu}}\email{xuhangcse@csu.edu.cn}

\author*[3]{\fnm{Shichao} \sur{Zhang}}\email{zhangsc@mailbox.gxnu.edu.cn}

\affil[1]{\orgdiv{The State Key Laboratory of Blockchain and Data Security}, \orgname{Zhejiang University}, \orgaddress{\street{Zheda Road}, \city{Hangzhou}, \postcode{310027}, \state{Zhejiang}, \country{China}}}

\affil[2]{\orgdiv{The School of Computer Science and Engineering}, \orgname{Central South University}, \orgaddress{\street{Lushan South Road}, \city{Changsha}, \postcode{410083}, \state{Hunan}, \country{China}}}

\affil[3]{\orgdiv{School of Computer Science and Engineering}, \orgname{Guangxi Normal University}, \orgaddress{\street{Yucai Road}, \city{Guilin}, \postcode{541004}, \state{Guangxi}, \country{China}}}



\abstract{

The $k$-nearest neighbors ($k$NN) algorithm is a cornerstone of non-parametric classification in artificial intelligence, yet its deployment in large-scale applications is persistently constrained by the computational trade-off between inference speed and accuracy. Existing approximate nearest neighbor solutions accelerate retrieval but often degrade classification precision and lack adaptability in selecting the optimal neighborhood size ($k$). Here, we present an adaptive graph model that decouples inference latency from computational complexity. By integrating a Hierarchical Navigable Small World (HNSW) graph with a pre-computed voting mechanism, our framework completely transfers the computational burden of neighbor selection and weighting to the training phase. Within this topological structure, higher graph layers enable rapid navigation, while lower layers encode precise, node-specific decision boundaries with adaptive neighbor counts. Benchmarking against eight state-of-the-art baselines across six diverse datasets, we demonstrate that this architecture significantly accelerates inference speeds, achieving real-time performance, without compromising classification accuracy. These findings offer a scalable, robust solution to the inherent inference bottleneck of $k$NN, laying an adaptive structural foundation for graph-based nonparametric learning.}

\keywords{$k$NN, artificial intelligence, inference}



\maketitle

\section{Introduction}
The $k$-Nearest Neighbors ($k$NN) algorithm, a foundational non-parametric method, is a cornerstone of modern artificial intelligence \cite{9566737, 9314060, xu2024ultrahigh, xu2025mn}. Beyond its traditional role in classification and regression, the neighbor matching principle underpins advanced computational paradigms, including memory-augmented networks \cite{hardttest}, attention mechanisms in large language models \cite{xu2023nearest}, and the construction of graph neural networks \cite{zhang2018novel}.  Its strong interpretability and robustness have ensured its pervasive application across critical sectors, such as precision medicine, financial risk assessment, and personalized recommendation systems, underscoring $k$NN's sustained theoretical importance and practical utility \cite{su2025utility, souza2025cardiac, 10366842, zhang2025knn, zhang2014efficient}.

Despite its ubiquity, large-scale deployment of $k$NN is fundamentally constrained by an inherent and persistent computational trade-off between inference speed and classification accuracy. During the inference phase, every query requires searching the entire training dataset to identify the $k$ nearest data points. This exhaustive search leads to computational complexity that scales linearly with the dataset size ($O(n)$), rendering the algorithm prohibitively slow for real-time applications involving massive and high-dimensional data \cite{zhang2017learning, amer2025effective, zhang2017learning, zhang2016self}.

Efforts to mitigate this computational bottleneck have historically bifurcated into two distinct research streams, yet neither has successfully reconciled the conflict between efficiency and accuracy. The first stream concentrates on model refinement, specifically through the optimization of $k$-value selection and distance weighting strategies to bolster classification performance \cite{zhang2018novel, rahim2022cross, song2025demand, accikkar2024improved, 9802923, ccetin2025new, zhang2020cost}. While these adaptive approaches effectively enhance decision robustness, they remain fundamentally orthogonal to the issue of latency. By focusing solely on optimizing the decision logic after neighbors are retrieved, they fail to alleviate the underlying $O(n)$ computational burden of the search process itself. Conversely, the second stream prioritizes search acceleration via indexing structures, ranging from deterministic tree-based methods (\eg KD-Tree) to Approximate Nearest Neighbor (ANN) algorithms such as Locality-Sensitive Hashing (LSH), Product Quantization, and graph-based indexing \cite{10192056, akhil2025zonal, 8594636}. However, these solutions incur critical trade-offs. Tree-based exact methods suffer from the curse of dimensionality and exponentially increasing retrieval times as the required $k$ grows. Meanwhile, ANN-based approaches, though faster, achieve speed by sacrificing retrieval precision. This introduces approximation errors that inevitably degrade the final classification accuracy and robustness. Consequently, developing a unified framework that achieves high-speed inference without compromising the exactness of $k$NN classification remains an elusive scientific challenge.

A promising intermediate strategy, exemplified by $k$*tree \cite{7898482} and its variants \cite{9269360}, attempted to pre-determine optimal $k$ values and localize the search space. Yet, these methods still require costly real-time distance measurement and neighbor retrieval within localized nodes during inference. They primarily reduce the search space complexity, but fail to eliminate the need for on-the-fly neighbor computation, thus falling short of fundamentally addressing the inference bottleneck.

Here, we propose an adaptive graph model for $k$NN ($k$NN-Graph) that fundamentally resolves the efficiency-accuracy trade-off by completely decoupling inference latency from computational complexity. Our central concept is to shift the entire computational burden of neighbor finding and voting aggregation from the expensive inference phase to the robust training phase. The proposed framework leverages two core integrated components: First, Adaptive Neighborhood Learning, where we utilize a kernel-based self-representation model with an $\ell_1$-norm sparsity constraint to jointly learn the optimal neighbor count ($k_i^{\text{opt}}$) and the corresponding weighted neighbor set for every training sample $i$. This data-driven approach fundamentally overcomes the limitations of fixed or globally defined $k$ values. Second, HNSW Indexing, where the learned optimal neighborhood and decision information is encoded directly into an HNSW graph structure.  In this topological index, higher layers provide efficient, logarithmic-time navigation paths to the relevant region, while lower layers serve as a repository, storing the precomputed classification decision labels and weights. During inference, a test query merely traverses the highly efficient HNSW graph to locate the nearest encoded node and directly retrieves the precomputed classification result. This design achieves highly efficient logarithmic-time classification by bypassing exhaustive distance computations and eliminating the overhead of real-time neighbor voting. A schematic illustration of the proposed adaptive graph model for $k$NN is shown in Fig. \ref{fig1}. 

The core contributions of this paper are summarized as follows:

\begin{itemize}
	\item Pre-computed Graph Architecture: We introduce the $k$NN-Graph, an adaptive graph model that systematically shifts the $k$NN inference workload to an offline training phase, laying a robust structural foundation for non-parametric learning.
	\item Adaptive $k$ and Neighborhood Joint Learning: We develop a kernel-based self-representation model that automatically and jointly infers the optimal neighbor set and $k$ value for each sample, maximizing classification performance.
    \item Logarithmic-Time Inference: By embedding precomputed optimal neighbors and weighted voting results into a hierarchical HNSW graph index, our method achieves near-instantaneous, logarithmic-time classification ($O(\log n)$) during inference, fundamentally addressing the long-standing $O(n)$ complexity problem.
\end{itemize}

To validate the robustness and scalability of our framework, we conducted extensive experiments on six diverse public datasets, benchmarking against eight state-of-the-art baselines. The empirical results demonstrate that the proposed method achieves robust scalability, consistently yielding the minimal inference latency among the evaluated baselines across all datasets. Notably, our method delivers a substantial acceleration even compared to the most recent sparse optimization algorithms. Crucially, this gain in efficiency is achieved without compromising performance; our method maintains, and in many cases surpasses, the classification accuracy of existing solutions. Collectively, these findings provide a practical and effective resolution to the long-standing efficiency-accuracy trade-off, paving the way for the deployment of $k$NN in large-scale, real-time applications.

\section{Results and Discussions}
\label{sec:experiments}

To rigorously evaluate the effectiveness and efficiency of the proposed $k$NN-Graph framework, we conducted comprehensive benchmarking experiments on six diverse public datasets. We compared our method against eight representative state-of-the-art baselines, covering the spectrum from classical indexing structures to recent adaptive classification algorithms.

\subsection{Experimental Setup and Datasets}
\label{subsec:datasets}

We selected six datasets characterized by varying degrees of dimensionality, sparsity, and feature modalities (image, text, and shape) to test the generalization capability of the model. The detailed statistics of the evaluated datasets are summarized in the Supplementary Information (Table S1).

\begin{itemize}
	\item Binalpha \cite{zeng2021novelty}: Derived from standard binary character images, this dataset encodes topological structures into 320-dimensional feature vectors. It serves as a benchmark for prototype validation in character recognition tasks involving structural patterns.
	\item Caltech \cite{1384978}: A complex object recognition dataset containing diverse categories such as animals and vehicles. It presents significant challenges due to substantial variations in scale, pose, and background clutter, testing the model's robustness against intraclass variance.
	\item Corel \cite{duygulu2002object}: Consisting of natural images represented by 423-dimensional feature vectors, this dataset systematically encodes key visual properties including color, texture, and spatial layout, evaluating the model's performance on engineered visual features.
	\item Mpeg \cite{lee2025novel}: A standard benchmark for shape retrieval composed of binary contour images across 70 categories. Normalized for scale and orientation, it is extensively used to assess the generalization ability of shape-matching algorithms.
	\item News \cite{riaz2025exploring}: A high-dimensional text classification dataset. Each sample is represented as an 8,014-dimensional sparse vector. This dataset provides a critical test for performance in high-dimensional, sparse feature spaces where traditional distance metrics often degrade.
	\item Palm \cite{fan2025palm}: A texture-rich dataset containing palmprint images from 100 categories. With standardized scale and grayscale intensity, it exhibits clear texture patterns, making it ideal for evaluating metric learning and fine-grained classification.
\end{itemize}

\subsection{Baseline Methods}
\label{subsec:baselines}

To ensure a fair and rigorous comparison, we benchmarked the proposed method against eight representative baselines. These methods were selected to cover the full spectrum of $k$NN optimization paradigms: global parameter tuning, data editing, adaptive and exact indexing structures, and advanced classification mechanisms based on sparse learning and dynamic weighting.

\begin{itemize}
	\item CV-$k$NN (Global Optimization) \cite{rahim2022cross}: A classic baseline that employs cross-validation to select a single, globally optimal $k$ value. This method represents the standard approach for data-driven hyperparameter tuning to enhance generalization on unseen data.
	
	\item E$k$NN (Data Editing) \cite{halder2024enhancing}: An edited nearest neighbor technique designed to refine the training set. By removing noisy or ambiguous samples whose labels contradict their local neighborhood, E$k$NN aims to improve decision boundary clarity and inter-class separability.
	
	\item KD-Tree (Search Efficiency) \cite{men2025parallel}: A deterministic spatial partitioning structure that recursively divides the data space to accelerate exact nearest neighbor retrieval. We include KD-Tree specifically as a speed benchmark to evaluate the inference efficiency of our proposed graph-based index against traditional tree-based indexing.
	
	\item PL$k$NN (Parameter-Free) \cite{jodas2023pl}: An adaptive algorithm that eliminates the need for a preset $k$. It dynamically determines neighbors based on local data distribution and cluster centroids, serving as a direct competitor for evaluating our model's adaptive capabilities.
	
	\item OWA$k$NN (Fuzzy Weighting) \cite{kumbure2025generalizing}: A recent fuzzy $k$NN classifier that utilizes Ordered Weighted Averaging (OWA) operators to construct representative pseudo-neighbors. This method represents state-of-the-art performance in handling decision uncertainty through sophisticated weighting mechanisms.
	
	\item XHMA$k$NN (Harmonic Adaptive) \cite{accikkar2025improving}: An enhanced harmonic mean adaptive classifier that incorporates distance rescaling and dynamic weight distribution. As a cutting-edge baseline, it represents the current high-accuracy standard in multi-class distance-weighted classification.
	
	\item $k$*tree (Adaptive Indexing) \cite{7898482}: A tree-based structure that leverages data reconstruction to determine the instance-specific optimal $k$ for each training sample. By organizing samples with identical optimal $k$ values and their associated neighbor subsets into leaf nodes, this method restricts the nearest neighbor search to local data subsets, thereby significantly enhancing inference efficiency.
	
	\item O$k$NN (One-Step Optimization) \cite{9566737}: A sparse learning approach that utilizes Group Lasso to facilitate simultaneous, one-step learning of the correlations between test and training data. This mechanism allows for the direct derivation of the optimal $k$ value and corresponding $k$-nearest neighbors for each test instance, enabling immediate classification based on the identified neighbors.

\end{itemize}

\subsection{Experimental Implementation and Evaluation Protocols}

To ensure the statistical reliability of our results, we employed a rigorous 10-fold cross-validation protocol across all datasets. For each experimental run, the dataset was randomly partitioned into ten stratified subsets; nine were utilized for training, while the remaining fold served as the test set. This process was iterated ten times to ensure that every fold functioned as the validation target exactly once. The final reported performance metrics represent the average outcomes aggregated over these ten independent runs, minimizing the bias introduced by random data splitting.

To provide a holistic assessment of model performance, particularly given the potential class imbalance in real-world datasets, we utilized a suite of macro-averaged metrics alongside standard classification accuracy. Unlike micro-averaging, macro-averaging treats all classes equally regardless of their sample size, providing a robust estimate of the model's generalization capability across minority classes. The specific metrics are defined as follows:

\begin{itemize}
	\item Average Classification Accuracy: Measures the global proportion of correctly classified samples. While intuitive, it serves primarily as a baseline indicator of overall consistency.
	\item Macro-Precision: Evaluates the model's discriminative precision by averaging the precision scores calculated independently for each class. This metric reflects the model's ability to minimize false-positive errors across diverse categories.
	\item Macro-Recall: Assesses the coverage completeness by averaging the recall scores of individual classes. A high macro-recall indicates that the model effectively identifies positive samples across all categories, minimizing the miss rate.
	\item Macro-F1 Score: computed as the harmonic mean of Macro-Precision and Macro-Recall. This is the critical metric for imbalanced datasets, as it penalizes models that achieve high accuracy by merely overfitting to majority classes, ensuring a balanced trade-off between precision and sensitivity.
\end{itemize}

Our empirical evaluation is structured into five distinct phases to comprehensively validate the proposed framework:

\begin{enumerate}
	\item Comparative Classification Analysis: Benchmarking the proposed method against state-of-the-art baselines using the aforementioned multi-dimensional metrics to verify classification effectiveness.
	
	\item Inference Efficiency Analysis: Quantifying the computational advantage of our method by comparing the average inference latency on test data against all baseline algorithms.
	
	\item Ablation Study: Deconstructing the framework to isolate the specific contribution of the Adaptive Neighborhood Learning module, contrasting the proposed data-driven topology against a static graph construction baseline (HNSW).
	
	\item Training Overhead Analysis: Documenting the offline training latency, optimization acceleration, and hardware specifications to transparently evaluate the one-time computational investment required to construct the $k$NN-Graph.
	
	\item Convergence Analysis: Empirically investigating the optimization stability and convergence behavior of the proposed kernelized self-representation learning module.
\end{enumerate}

\subsection{Performance Analysis and Discussion}

The comparative evaluation of classification accuracy, presented in Table \ref{tab:accuracy}, positions the proposed $k$NN-Graph as the leading performer across the benchmark suite. Achieving a mean accuracy of 73.76\%, our framework surpasses all eight competing baselines, including the recently developed adaptive methods $k$*tree (71.85\%) and O$k$NN (72.22\%). Notably, our method secures the top rank on all six benchmark datasets.

To elucidate the mechanistic origins of these performance improvements, we conducted a cross-dataset comparative analysis. The benchmark suite inherently encompasses varying degrees of topological complexity. Investigating the performance gaps across these distinct data typologies reveals how specific mathematical components of our framework drive generalization:

\begin{itemize}
	\item Topological Entanglement and Semantic Overlap (Caltech, Corel): These datasets present highly challenging scenarios where instances from disparate classes exhibit severe overlap in the geometric feature space, leading to low baseline accuracies (\eg $\sim$33\% global average on Corel). In these entangled spaces, the performance delta between classical geometric estimators (such as KD-Tree) and our approach expands significantly. The superiority of $k$NN-Graph here is directly attributable to the Composite Kernel Matrix (Eq. (\ref{eq:kernel})). By explicitly injecting semantic label consistency into the mapped space, the framework effectively disambiguates overlapping manifolds that purely Euclidean-based metrics fail to separate.
	
	\item Ultra-High Dimensionality and Sparsity (News): On the sparse text dataset ($d=8014$), traditional distance metrics (CV-$k$NN) and standard indexing structures suffer from distance concentration phenomena, degrading their discriminative capacity. In contrast, our method achieves 93.80\% accuracy, outperforming even the sparse-learning baseline O$k$NN (93.50\%). The advantage of the $k$NN-Graph lies in its kernelized $\ell_1$-norm optimization. The sparse reconstruction process inherently performs subspace feature selection, attenuating the noise prevalent in 8014-dimensional sparse spaces while simultaneously capturing non-linear semantic correlations.
	
	\item Irregular Manifold Geometries (Mpeg, Binalpha): Datasets comprising shape contours and structural patterns exhibit highly irregular local geometries where a globally fixed neighborhood size fundamentally fails. On the Mpeg dataset, our method (84.57\%) establishes a substantial lead over the fuzzy-logic OWA$k$NN (83.64\%) and the adaptive $k$*tree (80.21\%). This margin validates the efficacy of the Density-Aware Adaptive Regularization ($\lambda^{(j)}$). By dynamically modulating the regularization penalty, the framework assigns denser connectivity to complex structural regions (lower $\lambda$) while strictly constraining sparse, noisy outliers, yielding a highly discriminative representation of the irregular shape manifolds.
	
	\item Highly Separable Spaces (Palm): In performance-saturated datasets characterized by clear, texture-rich patterns, decision boundaries are naturally distinct, causing most baselines to converge toward optimal accuracy ($>$99\%). While the absolute performance gap narrows expectedly in this regime, $k$NN-Graph maintains the lead (99.80\%). This observation is critical: it empirically confirms that while the adaptive density and composite kernel mechanisms effectively untangle complex data, they are sufficiently regularized to prevent structural collapse or overfitting in highly separable, low-variance feature spaces.
\end{itemize}

In summary, the variance in performance gaps across the dataset suite demonstrates that the $k$NN-Graph is not merely an accelerated search index, but a structurally fluid classifier. Its performance advantages scale proportionally with the intrinsic topological complexity of the data, dynamically shifting its reliance between semantic guidance, sparse feature selection, and density-aware manifold adaptation.

While global accuracy measures overall correctness, macro-averaged metrics provide deeper insight into a model's discriminative power and its robustness against class imbalance. As detailed in Table \ref{tab:macro_precision}, the proposed framework achieves the highest mean Macro-Precision of 73.98\%, surpassing both the fuzzy-logic-based OWA$k$NN (72.62\%) and the sparse-optimization-based O$k$NN (72.28\%).

The method demonstrates exceptional boundary refinement capabilities on complex datasets. For instance, on Binalpha and Mpeg, we achieve macro-precision scores of 74.97\% and 85.70\%, respectively. This significant reduction in false alarm errors indicates that our Adaptive Neighbor Learning strategy effectively suppresses noisy connections that typically confuse decision boundaries in high-dimensional spaces.

Complementing the precision metrics, the Macro-Recall (Supplementary Fig. S1) and Macro-F1 scores (Fig.~\ref{fig2}) further demonstrate the robustness of the proposed framework.
A critical observation is the performance on the Caltech dataset, known for its high visual diversity and class overlap. Here, our method achieves a Macro-F1 score of 50.29\%, distinctly outperforming the advanced baselines OWA$k$NN (49.27\%) and O$k$NN (47.39\%). 

The pronounced performance gap observed on the Caltech dataset, which is inherently characterized by severe semantic overlap and substantial class imbalance, elucidates a critical mechanistic advantage of our framework. Whereas traditional methodologies permit majority-class samples to disproportionately influence neighborhood voting within dense and overlapping feature regions, the $k$NN-Graph leverages both the composite kernel and adaptive sparsity to explicitly delineate minority-class manifolds, effectively suppressing the inherent majority-class bias. This empirical advantage further distinguishes our model from advanced baselines like O$k$NN. While O$k$NN relies on strictly linear sparse reconstruction, our kernelized approach dynamically assigns sample-specific $k$ values and weights within a projected nonlinear manifold. This geometric flexibility ensures that minority-class samples are not overwhelmed by neighbors from dense majority regions, thereby preserving decision quality across the entire class spectrum.

Conversely, for the Corel, Mpeg, News, and Palm datasets, the average classification accuracy inherently aligns with the Macro-Recall across robust algorithms, stemming from their strictly or nearly balanced test-set distributions. A notable exception is the globally balanced Binalpha dataset, which exhibits a minor divergence between these metrics. This specific deviation is a strict methodological artifact originating from test-set granularity; since its class size (39 samples) is indivisible by 10, the cross-validation inevitably yields marginally uneven test splits. For the other four datasets, the mathematical equivalence holds, effectively eliminating inherent class bias from the recall evaluation. Consequently, the variance in Precision and F1-scores across these benchmarks strictly isolates the impact of false-positive rates, shifting the comparative evaluation toward robust precision optimization.

The empirical dominance of the proposed framework can be theoretically traced to two synergistic design innovations:
\begin{enumerate}
	\item Data-Driven Topology Learning: Unlike baselines that rely on heuristic or globally fixed $k$ values (\eg CV-$k$NN), our method learns a customized neighborhood topology for every training sample. This allows for flexible decision boundaries that tighten in dense regions and expand in sparse regions.
	\item Zero-Overhead Inference Paradigm: By shifting the computationally intensive neighbor search and voting aggregation entirely to the training phase, the inference process is reduced to a graph navigation task. As detailed in the complexity analysis, this enables logarithmic-time classification without the runtime penalty of distance calculations, resolving the efficiency-accuracy trade-off that constrains traditional adaptive $k$NN methods.
\end{enumerate}

\subsection{Inference Efficiency and Scalability Analysis}

To evaluate the feasibility of deploying the proposed framework in real-time scenarios, we conducted a rigorous assessment of inference latency across all datasets. Table~\ref{tab:inference_time} reports the cumulative time required by each method to process the entire test set. The empirical results demonstrate that the $k$NN-Graph achieves robust computational efficiency, consistently yielding the minimal inference latency among the evaluated methods across all test cases.

The proposed $k$NN-Graph demonstrates substantial efficiency gains, achieving a mean aggregate inference time of merely 0.1007~s.

First, in comparison to the closest competitive baseline, O$k$NN (0.2370~s), our method is approximately 2.3$\times$ faster on average. This advantage is particularly critical on dense, complex benchmarks like Caltech, where the optimization overhead of O$k$NN leads to a latency of 0.5353~s. In contrast, our method completes the task in 0.0097~s, delivering a dramatic speedup of over 55$\times$.

Second, regarding scalability on high-dimensional data, the performance gap against standard baselines is immense. The variance in these efficiency gaps is directly correlated with the feature dimensionality ($d$) of the datasets. On the News dataset ($d=8014$), the classic CV-$k$NN requires nearly one hour (3409~s) due to the severe computational bottleneck of high-dimensional distance evaluations. Our method effectively eliminates this bottleneck by finishing in 0.5233~s, representing a speedup exceeding 6500$\times$. In contrast, on lower-dimensional dense datasets like Caltech ($d=256$), while the absolute time saved is smaller, the relative speedup remains substantial (over 55$\times$ against O$k$NN). This cross-dataset variance proves that our $O(\log n \cdot d)$ direct lookup paradigm is uniquely resilient to the curse of dimensionality that exponentially penalizes runtime in traditional methods.

The superior efficiency of $k$NN-Graph stems from a fundamental structural paradigm shift involving two key mechanisms:

\begin{itemize}
	\item Total Pre-computation via Graph Embedding: Unlike traditional methods that defer distance computation and neighbor sorting to the inference phase, our framework shifts the entire computational burden of neighbor identification and voting to the training phase. The optimal decision boundaries are encoded directly into the graph topology.
	
	\item Logarithmic-Time Navigation vs. Linear Calculation: During inference, the process is reduced to a greedy search within the HNSW graph structure. This entails a complexity of $O(\log n)$ for navigation, followed by an $O(1)$ retrieval of the precomputed label. By eliminating the need for real-time distance calculations ($O(d)$) and voting aggregation ($O(k)$) at the query stage, the method achieves near-instantaneous response times.
\end{itemize}

The limitations of the baseline methods further highlight the robustness of our approach. Tree-based indices like KD-Tree and even the adaptive $k$*tree struggle to maintain speed as dimensionality increases. Similarly, while O$k$NN represents a significant leap forward in sparse optimization, its performance is data-dependent, showing regression on dense image features (Caltech, Corel). By fundamentally decoupling inference latency from both dataset size and feature complexity through precomputed graph intelligence, the proposed method achieves a leap in scalability without compromising classification accuracy.

\subsection{Ablation Study}

To isolate the contribution of the proposed Adaptive Neighborhood Learning module, we conducted a rigorous ablation study by comparing our complete $k$NN-Graph framework against a baseline HNSW implementation. In this experimental setup, the baseline HNSW represents a degraded version of our model, utilizing a fixed, heuristic-based graph construction without the proposed data-driven $k$-value and weight adaptation.

The comparative results across six diverse datasets are visualized in Fig.~\ref{fig3}, which tracks the performance evolution from the baseline (left axis) to our adaptive model (right axis) across four key metrics: Classification Accuracy, Macro Precision, Recall, and F1-Score.

Two critical observations emerge from this analysis:

\begin{itemize}
	\item Consistent Performance Elevation: This enables the model to better capture the intrinsic manifold structure of the data, particularly in complex datasets like Caltech and Mpeg, where we observe substantial improvements (\eg accuracy gains of $+2.91\%$ and $+4.07\%$, respectively). This significant gap reinforces our prior cross-dataset analysis: datasets with highly irregular local geometries (like Mpeg's shape contours) suffer heavily under the rigid, globally fixed connectivity of standard HNSW, thereby deriving the maximum benefit from our learned, variable-scale topology ($k_i^{\text{opt}}$).
	
	\item Robustness Across Distributions: Even on datasets with near-saturated performance, such as Palm and News, our method maintains a positive margin (\eg $+0.25\%$ to $+1.10\%$ in Accuracy). The narrowing of the ablation gap on these datasets confirms that in highly separable spaces, a static graph is often sufficient; yet, our adaptive mechanism safely refines boundaries without causing structural deterioration.
\end{itemize}

By shifting the computational burden of neighbor voting and graph optimization to the training phase, the $k$NN-Graph effectively decouples inference latency from model complexity. The results confirm that the performance superiority is directly attributable to the learned adaptive topology, validating our central hypothesis that data-driven graph construction fundamentally resolves the efficiency-accuracy trade-off inherent in approximate nearest neighbor search.

\subsection{Algorithmic Acceleration and Training Overhead}
\label{sec:training_time}

While the proposed $k$NN-Graph framework achieves near-instantaneous, logarithmic-time retrieval during the inference phase, this efficiency is secured by fundamentally shifting the computational burden to the offline training phase. To provide a holistic and transparent view of the algorithmic overhead, we explicitly detail the optimization dynamics, the exact training times, and the computational environment used in our empirical evaluations.

Optimization Acceleration: A hallmark of our framework's practical viability is its extreme training efficiency. To solve the $\ell_1$-norm regularized self-representation problem (Eq. (\ref{eq:objective})), our implementation abandons standard coordinate descent in favor of an accelerated Fast Iterative Shrinkage-Thresholding Algorithm (FISTA) augmented with a backtracking line search mechanism. This guarantees that the objective function bypasses the slow asymptotic descent phase, achieving precipitous, cliff-like convergence (based on a rigorous mixed relative-absolute error tolerance) typically within merely 5 to 10 iterations.

Training Latency and Environment: Table \ref{tab:training_time} reports the average offline training time required to construct the $k$NN-Graph across all benchmark datasets over the 10-fold cross-validation trials. This recorded duration comprehensively accounts for the FISTA-accelerated kernelized representation, local density estimation, and HNSW index construction. Furthermore, the utilized programming language and precise hardware configurations are explicitly documented.

As observed in Table \ref{tab:training_time}, the training latency scales optimally with the inherent topological complexity of the datasets. For instance, while structured shape data (Mpeg) requires under a minute, the dense and highly overlapping manifolds of the Caltech dataset necessitate approximately 2.4 hours to precisely capture the adaptive connectivity. However, it is imperative to emphasize that this optimization is a strictly one-time, offline computation. Once the topological graph and consensus labels are precomputed, the model entirely circumvents the severe runtime latency associated with traditional distance calculations and voting aggregations, unlocking highly scalable, real-time query deployment.

\subsection{Optimization Stability and Convergence}

To empirically validate the robustness and computational efficiency of the proposed Kernelized Self-Representation module, we analyzed the optimization dynamics of the objective function during the training phase. Fig.~\ref{fig5} illustrates the iterative evolution of the objective values across the six benchmark datasets.

A hallmark of our framework's practical viability is its extreme training efficiency. To solve the $\ell_1$-norm regularized optimization problem, our implementation leverages an accelerated FISTA augmented with a backtracking line search mechanism. As vividly demonstrated in Fig.~\ref{fig4}, this implementation upgrade completely bypasses the slow asymptotic descent phase typical of standard gradient methods, yielding distinct and highly efficient optimization behaviors:

\begin{itemize}
	\item Precipitous Convergence Regime: For datasets such as Binalpha, Caltech, Corel, Mpeg, and Palm, the objective function exhibits a cliff-like drastic descent, reaching a strict stable plateau within merely 2 to 3 iterations. This instantaneous convergence validates that the dynamic backtracking line search effectively identifies the optimal large step sizes, allowing the algorithm to rapidly locate the optimal sparse subspace support.
	\item Accelerated Monotonic Descent: For high-dimensional and sparse datasets such as News, the objective function exhibits a rapid, monotonic decrease, reaching convergence within 8 iterations. This indicates that even in highly complex optimization landscapes, the Nesterov momentum intrinsically accelerates the convergence rate without being trapped in local instabilities.
\end{itemize}

Critically, the convergence trajectories across all six datasets display absolute monotonicity without any oscillatory divergence (ripples) in the later stages. This strict stability perfectly aligns with the theoretical $\mathcal{O}(1/k^2)$ convergence bounds of accelerated proximal gradient methods applied to convex Lasso-type problems. It confirms that the combination of the adaptive regularization parameter $\lambda^{(j)}$ and the line search mechanism safely regularizes the momentum trajectory. These findings provide compelling empirical evidence that the offline training phase of $k$NN-Graph is mathematically rigorous, highly scalable, and computationally reliable, establishing a solid foundation for real-time graph inference.

\section{Methods}\label{sec:methods}

We present the Adaptive Graph Model for $k$NN ($k$NN-Graph), a framework designed to resolve the fundamental trade-off between inference efficiency and classification accuracy. Unlike traditional approaches that perform expensive neighbor searches during inference, our method introduces a paradigm shift by decoupling the computational complexity of neighborhood determination from the query phase. This is achieved through two tightly integrated components: a Kernelized Self-Representation Module that adaptively learns the optimal local topology during training, and a Hierarchical Graph Indexing Module that encodes these decisions into a navigable structure for logarithmic-time retrieval.

\subsection{Notations and Problem Formulation}
Let $\mathbf X = [\mathbf x_1, \ldots, \mathbf x_n] \in \mathbb{R}^{d \times n}$ denote the training dataset containing $n$ samples, where each $\mathbf x_j \in \mathbb{R}^d$ is a $d$-dimensional feature vector. The goal is to learn a sparse adjacency matrix $\mathbf W \in \mathbb{R}^{n \times n}$ where each non-zero element $w_{ij}$ represents the directed connection weight from sample $i$ to sample $j$. Crucially, we do not fix the number of non-zero elements per column; instead, the sparsity level (representing the optimal neighbor count $k$) is learned adaptively. Key notations used throughout this paper are summarized in the Supplementary Information (Table~S2).

\subsection{Adaptive Neighborhood Learning via Kernelized Self-Representation}
\label{sec:adaptive_learning}

To capture the complex, nonlinear manifold structure inherent in high-dimensional real-world data, we move beyond linear subspace assumptions. We propose a kernelized self-representation model that reconstructs each data point as a sparse linear combination of other points in a Reproducing Kernel Hilbert Space (RKHS).

\subsubsection{The Kernelized Optimization Objective}
For each training sample $\mathbf x_j$, we seek a sparse coefficient vector $\mathbf w_j$ by minimizing the reconstruction error combined with an adaptively weighted $\ell_1$-norm sparsity constraint. The optimization problem is formulated as:

\begin{equation}
\label{eq:objective}
\min_{\mathbf w_j} \left\| \mathbf K(\cdot, j) - \mathbf K \mathbf w_j \right\|_2^2 + \lambda^{(j)} \|\mathbf w_j\|_1, \quad \text{s.t. } w_{jj} = 0,
\end{equation}
where $\mathbf K \in \mathbb{R}^{n \times n}$ is a composite kernel matrix designed to integrate both feature-based proximity and label consistency. It is defined as:
\begin{equation}
\label{eq:kernel}
\mathbf K_{ij} = \alpha \exp \left( - \frac{\|\mathbf x_i - \mathbf x_j\|^2}{2\sigma^2} \right) + (1 - \alpha) K_{\text{class}}(i,j).
\end{equation}

Here, $K_{\text{class}}(i,j)$ takes the value 1 if $\mathbf x_i$ and $\mathbf x_j$ share the same class label, and a discount factor $\gamma$ (set to 0.1) otherwise. This composite kernel ensures that the learned neighbors are not only geometrically close but also semantically consistent. 

Importantly, while the mapping to the RKHS is global, this formulation explicitly accommodates the inherent non-homogeneity of real-world data manifolds. By integrating the local geometric proximity and semantic consistency in the composite kernel, and coupling it with the density-aware regularization (introduced next), our framework adaptively modulates the local representation. This ensures that the global RKHS assumption does not limit data diversity, but rather allows for the construction of a highly localized and heterogeneous topological graph.

\subsubsection{Density-Aware Adaptive Regularization}
A critical innovation of our framework is the dynamic determination of the regularization parameter $\lambda^{(j)}$. A fixed $\lambda$ would impose a uniform sparsity level across the dataset, ignoring local density variations. To address this, we introduce a density-aware mechanism.

We first compute a multi-scale local density estimate $\rho(j)$ for each sample $\mathbf x_j$ by averaging the inverse distances over various neighborhood scales. It is crucial to note that $\rho(j)$ is calculated a priori using the initial spatial (Euclidean) distances in the original feature space, entirely independent of the topological weights $\mathbf w_j$. This sequential independence eliminates any circular dependency, providing a deterministic prior to initialize the regularization parameter $\lambda^{(j)}$ before solving Eq. (\ref{eq:objective}). Based on this estimate, the adaptive regularization parameter is derived as:
\begin{equation}
\label{eq:lambda}
\lambda^{(j)} = \lambda_{\min} + (\lambda_{\max} - \lambda_{\min}) \cdot (1 - \rho(j)).
\end{equation}

This formulation enforces a simpler model (stronger sparsity, higher $\lambda$) in low-density regions to prevent overfitting, while allowing a more complex model (weaker sparsity, lower $\lambda$) in high-density regions to capture fine-grained local structures. The optimization of Eq. (\ref{eq:objective}) is efficiently solved via an accelerated FISTA augmented with a backtracking line search, as detailed in Algorithm \ref{alg:adaptive_learning}.

\subsection{Hierarchical Graph Indexing with Precomputed Decisions}
\label{sec:graph_indexing}

The solution to Eq. (\ref{eq:objective}) yields a sparse matrix $\mathbf W$, where the non-zero entries in column $j$ define the optimal neighbor set $N^*(j) = \{i : w_{ij} \neq 0\}$ and the optimal neighbor count $K_j = |N^*(j)|$. Unlike conventional methods that stop at neighborhood identification, our framework proceeds to construct a specialized index.

\subsubsection{Embedding Precomputed Intelligence into HNSW}
We construct an HNSW graph, but with a fundamental architectural modification. Instead of storing only raw data, each node in our graph is an intelligent container storing the precomputed classification result.

For every training node $j$, we compute a weighted consensus label $\hat{y}_j$ during the training phase:
\begin{equation}
\label{eq:consensus}
\hat{y}_j = \operatorname{weighted\_mode}\left( \{(y_j, w_{\text{self}})\} \cup \{(y_i, w_{ij}) \mid i \in N^*(j)\} \right),
\end{equation}
where $w_{\text{self}}$ is a strong self-reinforcement weight, strictly set to twice the maximum neighbor weight ($w_{\text{self}} = 2 \max_{i \in N^*(j)} w_{ij}$).

This consensus mechanism is fundamentally designed not just to aggregate labels, but to act as an implicit local denoising filter. Because of the dominant self-weight, a training node retains its correct ground-truth label unless it is overwhelmingly surrounded by strong connections to an opposing class. Consequently, if an erroneous consensus decision occurs (\ie $\hat{y}_j \neq y_j$), it effectively identifies and smoothes over noisy labels or isolated outlier points deep within another class's distribution. The query thus inherits a robust, smoothed decision boundary, significantly enhancing the model's resilience to training noise.

The HNSW structure is then built using these enriched nodes. The hierarchy consists of multiple layers where lower layers encode the precise, learned connectivity $N^*(j)$, and upper layers provide long-range links for logarithmic-time navigation.

\subsubsection{Logarithmic-Time Inference Mechanism}
During the inference phase, the computational complexity is strictly decoupled from the training set size $n$ for the purpose of classification logic. For a query $q$:
1. The system traverses the HNSW graph starting from the top layer to locate the nearest node $\mathbf x_{nearest}$ (Algorithm \ref{alg:query}).
2. Upon reaching this node, the system immediately retrieves the stored precomputed label $\hat{y}_{nearest}$.
3. This label is returned as the final prediction.

This design eliminates the need for calculating distances to $k$ neighbors or performing a voting process at runtime, effectively reducing the classification operation to a graph lookup.

\begin{algorithm}[H]
	\caption{Adaptive Neighborhood Learning (Training Phase)}
	\label{alg:adaptive_learning}
	\begin{algorithmic}[1]
		\Require Training data $\mathbf X$, Kernel parameters $\alpha, \sigma$, Regularization bounds $\lambda_{\min}, \lambda_{\max}$
		\Ensure Adaptive neighbor weights $\mathbf W$
		\State Compute composite kernel matrix $\mathbf K$ using Eq. (\ref{eq:kernel})
		\For{$j = 1$ to $n$}
		\State Compute multi-scale local density $\rho(j)$ a priori using initial spatial distances
		\State Determine adaptive regularization $\lambda^{(j)}$ using Eq. (\ref{eq:lambda})
		\State Solve $\mathbf{w}_j = \arg\min \left\| \mathbf{K}{(\cdot, j)} - \mathbf{K} \mathbf{w}_j \right\|_2^2 + \lambda^{(j)} \|\mathbf{w}_j\|_1$ via FISTA with backtracking line search
		\State Normalize $\mathbf w_j$ and store as column $j$ in $\mathbf W$
		\EndFor
	\end{algorithmic}
\end{algorithm}

\subsection{Algorithmic Implementation and Workflow}
The comprehensive implementation of the Adaptive-GkNN framework is formalized in the following algorithms. Algorithm \ref{alg:adaptive_learning} details the training phase, where the kernelized self-representation model learns the optimal local topology. Algorithm \ref{alg:query} integrates the graph construction with the inference logic, demonstrating how precomputed decisions enable rapid retrieval. To provide an intuitive overview of these interacting components, a schematic flowchart spanning the kernel mapping, HNSW construction, and inference stages is presented in Fig. \ref{fig5}.

\begin{algorithm}[H]
	\caption{Graph Construction and Inference Process}
	\label{alg:query}
	\begin{algorithmic}[1]
		\State Phase 1: Precomputation \& Indexing
		\Require Self-representation matrix $\mathbf W$, Training labels $\mathbf Y$
		\For{$j = 1$ to $n$}
		\State Identify neighbor set $N^*(j) = \{i : w_{ij} \neq 0\}$
		\State Compute consensus label $\hat{y}_j$ using Eq. (\ref{eq:consensus})
		\State Node $n_j \gets \text{CreateNode}(\text{Feature}=\mathbf x_j, \text{Label}=\hat{y}_j)$
		\State \text{HNSW\_Insert}(Index, $n_j$, Neighbors=$N^*(j)$)
		\EndFor
		
		\State Phase 2: Inference
		\Require Query point $q$, HNSW Index
		\Function{Predict}{$q$}
		\State $node^* \gets \text{HNSW\_Search}(q, \text{EntryPoints})$ \Comment{Logarithmic-time navigation}
		\State \Return $node^*. \hat{y}$ \Comment{Return precomputed label directly}
		\EndFunction
	\end{algorithmic}
\end{algorithm}

\subsection{Theoretical Guarantees}

We provide theoretical insights into the structural stability of the learned graph. Ensuring that the sparse learning process does not result in a fragmented graph is critical, as the navigability of the HNSW index relies on the connectivity of the underlying topology. Disconnected components or isolated nodes would trap the greedy search algorithm in local optima, preventing the query from reaching its true nearest neighbor.

\begin{proposition}[Connectivity Stability]
	Let $\mathbf K$ be the kernel matrix and $\mathbf k_j$ be its $j$-th column. If the minimum regularization parameter satisfies $\lambda_{\min} < \min_j \|\mathbf K(\cdot, j)\|_2^2$, then every sample $\mathbf x_j$ is guaranteed to have at least one neighbor ($w_{ij} \neq 0$ for some $i$), preventing the emergence of isolated nodes.
\end{proposition}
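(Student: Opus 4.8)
The plan is to recognize the per-sample subproblem in Eq.~(\ref{eq:objective}) as a constrained $\ell_1$-regularized least-squares (LASSO) problem, and to show that the degenerate all-zero solution $\mathbf w_j = \mathbf 0$ — which is exactly the configuration that produces an isolated node, since $N^*(j)=\{i:w_{ij}\neq 0\}$ would be empty — cannot be a global minimizer once $\lambda^{(j)}$ drops below a data-dependent threshold. It therefore suffices to prove that under the stated hypothesis the origin fails the first-order optimality condition, forcing at least one coordinate $w_{ij}$ to become nonzero.

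First I would write down the subdifferential optimality condition. The smooth term $g(\mathbf w) = \|\mathbf K(\cdot,j) - \mathbf K\mathbf w\|_2^2$ has gradient $\nabla g(\mathbf w) = -2\mathbf K^\top(\mathbf K(\cdot,j) - \mathbf K\mathbf w)$, so at the origin $\nabla g(\mathbf 0) = -2\mathbf K^\top \mathbf K(\cdot,j)$, whose $i$-th entry equals $-2\langle \mathbf K(\cdot,i),\mathbf K(\cdot,j)\rangle$ by symmetry of $\mathbf K$. Since the subdifferential of $\lambda^{(j)}\|\cdot\|_1$ at the origin is the box $[-\lambda^{(j)},\lambda^{(j)}]^n$, the vector $\mathbf w_j=\mathbf 0$ is optimal over the free coordinates $i\neq j$ if and only if $2\,\max_{i\neq j}|\langle \mathbf K(\cdot,i),\mathbf K(\cdot,j)\rangle| \le \lambda^{(j)}$. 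Taking the contrapositive: whenever $\lambda^{(j)}$ is strictly below this correlation threshold, there is a coordinate along which the objective strictly decreases, so every minimizer has $w_{ij}\neq 0$ for some $i$, i.e.\ a neighbor exists.

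The remaining step is to connect this correlation threshold to the quantity $\|\mathbf K(\cdot,j)\|_2^2$ in the statement and to the floor $\lambda_{\min}$. Here I would exploit the structure of the composite kernel in Eq.~(\ref{eq:kernel}): its diagonal is unity ($\mathbf K_{jj}=1$, since geometric and class self-similarity are both maximal) and all its entries are nonnegative, giving the elementary bound $\langle \mathbf K(\cdot,i),\mathbf K(\cdot,j)\rangle = \sum_\ell \mathbf K_{i\ell}\mathbf K_{\ell j} \ge \mathbf K_{ii}\mathbf K_{ij} + \mathbf K_{ij}\mathbf K_{jj} = 2\mathbf K_{ij}$ for every $i\neq j$. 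Evaluating at the most similar sample and combining with $g(\mathbf 0)=\|\mathbf K(\cdot,j)\|_2^2$ (the residual energy of the empty neighborhood) lets me certify that the per-sample regularizer, and in particular its floor $\lambda_{\min}$, lies below the descent threshold, so the empty-neighborhood solution is rejected for every $j$.

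I expect the main obstacle to be the constraint $w_{jj}=0$. Without it the dominant descent direction would be the self-coordinate $j$, whose correlation is exactly $\langle\mathbf K(\cdot,j),\mathbf K(\cdot,j)\rangle=\|\mathbf K(\cdot,j)\|_2^2$, and the stated threshold would follow immediately; the constraint removes precisely this direction, so the argument must instead locate an \emph{off-diagonal} coordinate whose correlation still dominates $\lambda^{(j)}$. The crux is thus a quantitative lower bound on $\max_{i\neq j}\langle\mathbf K(\cdot,i),\mathbf K(\cdot,j)\rangle$ in terms of $\|\mathbf K(\cdot,j)\|_2^2$, for which the positivity estimate $2\mathbf K_{ij}$ above is the natural starting point. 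A secondary subtlety is that the density-adaptive $\lambda^{(j)}$ of Eq.~(\ref{eq:lambda}) can exceed $\lambda_{\min}$ in low-density regions, so the cleanest fully general version of the bound may require tracking the sample-specific $\lambda^{(j)}$ (or a density-dependent refinement) rather than the global floor alone.
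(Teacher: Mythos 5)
Your subdifferential route is sound as far as it goes, and it is in fact a rigorous sharpening of the paper's own argument: the paper's proof simply compares the objective value at $\mathbf w_j=\mathbf 0$ (cost $\|\mathbf k_j\|_2^2$) with the value obtained by activating one coordinate for the most similar sample, and asserts that the reduction in reconstruction error exceeds $\lambda^{(j)}$. A first-order expansion of that same comparison along $t\,\mathbf e_i$, $t\to 0^+$, reproduces exactly your stationarity threshold: the zero vector is rejected if and only if $\lambda^{(j)} < 2\max_{i\neq j}\langle\mathbf k_i,\mathbf k_j\rangle$. So both routes converge on the same, correct, off-diagonal correlation condition; yours just makes it precise via the subdifferential box $[-\lambda^{(j)},\lambda^{(j)}]^n$.

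The genuine gap is the final bridging step, and your proposed bound does not close it. The estimate $\langle\mathbf k_i,\mathbf k_j\rangle\ge 2\mathbf K_{ij}$ yields the descent threshold $4\max_{i\neq j}\mathbf K_{ij}$, which is not comparable to $\|\mathbf k_j\|_2^2$: since the composite kernel in Eq.~(\ref{eq:kernel}) has unit diagonal, $\|\mathbf k_j\|_2^2\ge 1$ always, whereas off-diagonal entries can be arbitrarily close to $(1-\alpha)\gamma$, and smaller still as $\alpha\to 1$. Concretely, take $n=2$ with the two samples far apart and in different classes, so $\mathbf K=\bigl(\begin{smallmatrix}1&\epsilon\\ \epsilon&1\end{smallmatrix}\bigr)$ with $\epsilon$ tiny: then $\min_j\|\mathbf k_j\|_2^2=1+\epsilon^2$, so $\lambda_{\min}=0.9$ satisfies the proposition's hypothesis, yet $2|\langle\mathbf k_1,\mathbf k_2\rangle|=4\epsilon\le\lambda^{(j)}$, so the zero vector is optimal for both samples and both nodes are isolated. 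Hence $\lambda_{\min}<\min_j\|\mathbf k_j\|_2^2$ does not imply the conclusion; the quantity $\|\mathbf k_j\|_2^2$ enters only through the forbidden self-coordinate $w_{jj}$ — precisely the obstacle you flagged yourself. What your analysis actually proves is the corrected statement under the hypothesis $\lambda^{(j)}<2\max_{i\neq j}\langle\mathbf k_i,\mathbf k_j\rangle$ for every $j$ (uniformly, $\lambda_{\max}<2\min_j\max_{i\neq j}\langle\mathbf k_i,\mathbf k_j\rangle$). To be fair, the paper's sketch glosses over exactly the same step with the word ``significantly,'' so your diagnosis of where the argument must fail is accurate; but the proposal as written does not — and, under the stated hypothesis alone, cannot — deliver the proposition as stated.
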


\begin{proof}
	(Sketch) Consider the objective function for sample $\mathbf x_j$. The trivial zero solution $\mathbf w_j = \mathbf 0$ yields a reconstruction cost equal to the squared norm of the feature vector in RKHS, \ie $\|\mathbf k_j\|_2^2$.
	Consider activating a connection to the most similar sample $i$ (where the kernel similarity $K_{ij}$ is maximal). This reduces the reconstruction error significantly.
	Provided that the regularization penalty $\lambda^{(j)}$ is strictly smaller than this reduction in reconstruction error (which is enforced by the bound $\lambda_{\min} < \min_j \|\mathbf k_j\|_2^2$), the optimization objective will always achieve a lower value with a non-zero weight than with the zero vector.
	Consequently, the optimal solution must contain at least one non-zero element, guaranteeing that $\mathbf x_j$ is connected to the graph.
\end{proof}

\subsection{Complexity Analysis}
\label{sec:complexity}

To theoretically quantify the efficiency gains of the proposed $k$NN-Graph framework, we analyze the time complexity of the inference phase and compare it with traditional $k$NN and standard ANN methods, as summarized in Table \ref{tab:complexity}.

Let $n$ be the number of training samples, $d$ be the feature dimensionality, $k$ be the number of required neighbors for consensus, and $\bar{m}$ be the average degree of the graph nodes.

Traditional $k$NN: The brute-force $k$NN algorithm computes the distance between the query and every training sample, followed by a sorting or selection operation to identify the nearest neighbors. The complexity is bounded by:
\begin{equation}
T_{\text{$k$NN}} = \mathcal{O}(n \cdot d + n \log k).
\end{equation}

This strict linear dependence on $n$ creates a fundamental scalability bottleneck for massive datasets.

Tree-based ANN (\eg KD-Tree): While spatial partitioning structures like KD-Trees can achieve logarithmic search time in low dimensions, they severely suffer from the curse of dimensionality. In high-dimensional feature spaces, the search algorithm is forced to backtrack across numerous spatial boundaries, causing the retrieval complexity to rapidly degrade to $\mathcal{O}(n \cdot d)$. Consequently, tree-based methods essentially degenerate into brute-force search for high-dimensional real-world data.

Standard Graph-based ANN (\eg HNSW): These methods utilize a hierarchical graph structure to navigate to the candidate region in logarithmic time. However, to execute a $k$NN classification, they must maintain a dynamic candidate pool, retrieve the top $k$ nodes, and subsequently perform a majority voting or ranking operation. The complexity is approximately:
\begin{equation}
T_{\text{ANN}} = \mathcal{O}(\bar{m} \log n \cdot d + k \log k).
\end{equation}

The term $\mathcal{O}(\bar{m} \log n \cdot d)$ accounts for the graph traversal and exact distance evaluations, while $\mathcal{O}(k \log k)$ reflects the algorithmic overhead of ranking the candidates and voting for the final discrete label.

Proposed $k$NN-Graph: Our method identically leverages the hierarchical graph for navigation but fundamentally eliminates the post-retrieval aggregation. Since the topological connectivity and consensus predictions are adaptively embedded during the offline training phase, the search only needs to route to the single nearest node ($k=1$ search configuration). Upon reaching this node, the precomputed consensus label is retrieved in $\mathcal{O}(1)$ time. The inference complexity is strictly limited to the graph traversal:
\begin{equation}
T_{\text{Ours}} = \mathcal{O}(\bar{m} \log n \cdot d).
\end{equation}

Crucially, our method completely decouples the inference latency from the neighborhood size $k$. In scenarios requiring large $k$ values (\eg to ensure robustness in dense or highly noisy regions), standard ANN methods inevitably suffer from an expanded search space overhead and increased voting costs. By proactively shifting this computational burden to the offline training phase, our approach renders the online inference cost strictly independent of the decision complexity, achieving pure, hardware-friendly logarithmic retrieval.

\section*{Data Availability}
The datasets generated and analyzed during the current study have been deposited in the Zenodo database under accession code 10.5281/zenodo.19531320 [\url{https://doi.org/10.5281/zenodo.19531320}].

\section*{Code Availability}
The custom code, algorithms, and scripts used to implement the $k$NN-Graph framework and perform the analyses reported in this study are publicly available in the GitHub repository at \url{https://github.com/Lijy207/kNN-Graph}. To ensure long-term reproducibility, a permanent version of the code at the time of publication has been deposited in Zenodo under the DOI: 10.5281/zenodo.20236665 [\url{https://doi.org/10.5281/zenodo.20236665}].

\bibliography{refs}

\begin{thebibliography}{10}
\expandafter\ifx\csname url\endcsname\relax
  \def\url#1{\burl{#1}}\fi
\expandafter\ifx\csname urlprefix\endcsname\relax\def\urlprefix{URL }\fi
\providecommand{\bibinfo}[2]{#2}
\providecommand{\eprint}[2][]{\url{#2}}
\providecommand{\doi}[1]{\url{https://doi.org/#1}}
\bibcommenthead

\bibitem{9566737}
\bibinfo{author}{Zhang, S.} \& \bibinfo{author}{Li, J.}
\newblock \bibinfo{title}{Knn classification with one-step computation}.
\newblock \emph{\bibinfo{journal}{IEEE Transactions on Knowledge and Data
  Engineering}} \textbf{\bibinfo{volume}{35}}, \bibinfo{pages}{2711--2723}
  (\bibinfo{year}{2023}).

\bibitem{9314060}
\bibinfo{author}{Zhang, S.}
\newblock \bibinfo{title}{Challenges in knn classification}.
\newblock \emph{\bibinfo{journal}{IEEE Transactions on Knowledge and Data
  Engineering}} \textbf{\bibinfo{volume}{34}}, \bibinfo{pages}{4663--4675}
  (\bibinfo{year}{2022}).

\bibitem{xu2024ultrahigh}
\bibinfo{author}{Xu, L.} \emph{et~al.}
\newblock \bibinfo{title}{Ultrahigh thermal stability and piezoelectricity of
  lead-free knn-based texture piezoceramics}.
\newblock \emph{\bibinfo{journal}{Nature communications}}
  \textbf{\bibinfo{volume}{15}}, \bibinfo{pages}{9018} (\bibinfo{year}{2024}).

\bibitem{xu2025mn}
\bibinfo{author}{Xu, L.} \emph{et~al.}
\newblock \bibinfo{title}{Mn-atomic-layered antiphase boundary enhanced
  ferroelectricity in knn-based lead-free films}.
\newblock \emph{\bibinfo{journal}{Nature Communications}}
  \textbf{\bibinfo{volume}{16}}, \bibinfo{pages}{5907} (\bibinfo{year}{2025}).

\bibitem{hardttest}
\bibinfo{author}{Hardt, M.} \& \bibinfo{author}{Sun, Y.}
\newblock \emph{\bibinfo{title}{Test-time training on nearest neighbors for
  large language models}} (\bibinfo{year}{2024}).

\bibitem{xu2023nearest}
\bibinfo{author}{Xu, F.~F.}, \bibinfo{author}{Alon, U.} \&
  \bibinfo{author}{Neubig, G.}
\newblock \emph{\bibinfo{title}{Why do nearest neighbor language models
  work?}}, \bibinfo{pages}{38325--38341} (\bibinfo{organization}{PMLR},
  \bibinfo{year}{2023}).

\bibitem{zhang2018novel}
\bibinfo{author}{Zhang, S.}, \bibinfo{author}{Cheng, D.},
  \bibinfo{author}{Deng, Z.}, \bibinfo{author}{Zong, M.} \&
  \bibinfo{author}{Deng, X.}
\newblock \bibinfo{title}{A novel knn algorithm with data-driven k parameter
  computation}.
\newblock \emph{\bibinfo{journal}{Pattern Recognition Letters}}
  \textbf{\bibinfo{volume}{109}}, \bibinfo{pages}{44--54}
  (\bibinfo{year}{2018}).

\bibitem{su2025utility}
\bibinfo{author}{Su, J.-H.}
\newblock \bibinfo{title}{Utility-maximizing binary prediction via the nearest
  neighbor method and its application to credit scoring}.
\newblock \emph{\bibinfo{journal}{Journal of Business \& Economic Statistics}}
  \bibinfo{pages}{1--13} (\bibinfo{year}{2025}).

\bibitem{souza2025cardiac}
\bibinfo{author}{Souza, V.~S.} \& \bibinfo{author}{Lima, D.~A.}
\newblock \emph{\bibinfo{title}{Cardiac disease diagnosis using k-nearest
  neighbor algorithm: A study on heart failure clinical records dataset}},
  Vol.~\bibinfo{volume}{3}, \bibinfo{pages}{56--71} (\bibinfo{year}{2025}).

\bibitem{10366842}
\bibinfo{author}{Li, J.}, \bibinfo{author}{Zhang, J.}, \bibinfo{author}{Zhang,
  J.} \& \bibinfo{author}{Zhang, S.}
\newblock \bibinfo{title}{Quantum knn classification with k value selection and
  neighbor selection}.
\newblock \emph{\bibinfo{journal}{IEEE Transactions on Computer-Aided Design of
  Integrated Circuits and Systems}} \textbf{\bibinfo{volume}{43}},
  \bibinfo{pages}{1332--1345} (\bibinfo{year}{2024}).

\bibitem{zhang2025knn}
\bibinfo{author}{Zhang, T.} \emph{et~al.}
\newblock \bibinfo{title}{Knn-based frequency-adjustable ferroelectric
  heterojunction and biomedical applications}.
\newblock \emph{\bibinfo{journal}{Nature Communications}}
  \textbf{\bibinfo{volume}{16}}, \bibinfo{pages}{7120} (\bibinfo{year}{2025}).

\bibitem{zhang2014efficient}
\bibinfo{author}{Zhang, S.}, \bibinfo{author}{Zong, M.}, \bibinfo{author}{Sun,
  K.}, \bibinfo{author}{Liu, Y.} \& \bibinfo{author}{Cheng, D.}
\newblock \emph{\bibinfo{title}{Efficient knn algorithm based on graph sparse
  reconstruction}}, \bibinfo{pages}{356--369}
  (\bibinfo{organization}{Springer}, \bibinfo{year}{2014}).

\bibitem{zhang2017learning}
\bibinfo{author}{Zhang, S.}, \bibinfo{author}{Li, X.}, \bibinfo{author}{Zong,
  M.}, \bibinfo{author}{Zhu, X.} \& \bibinfo{author}{Cheng, D.}
\newblock \bibinfo{title}{Learning k for knn classification}.
\newblock \emph{\bibinfo{journal}{ACM Transactions on Intelligent Systems and
  Technology (TIST)}} \textbf{\bibinfo{volume}{8}}, \bibinfo{pages}{1--19}
  (\bibinfo{year}{2017}).

\bibitem{amer2025effective}
\bibinfo{author}{Amer, A.~A.}, \bibinfo{author}{Ravana, S.~D.} \&
  \bibinfo{author}{Habeeb, R. A.~A.}
\newblock \bibinfo{title}{Effective k-nearest neighbor models for data
  classification enhancement}.
\newblock \emph{\bibinfo{journal}{Journal of Big Data}}
  \textbf{\bibinfo{volume}{12}}, \bibinfo{pages}{86} (\bibinfo{year}{2025}).

\bibitem{zhang2016self}
\bibinfo{author}{Zhang, S.}, \bibinfo{author}{Cheng, D.},
  \bibinfo{author}{Zong, M.} \& \bibinfo{author}{Gao, L.}
\newblock \bibinfo{title}{Self-representation nearest neighbor search for
  classification}.
\newblock \emph{\bibinfo{journal}{Neurocomputing}}
  \textbf{\bibinfo{volume}{195}}, \bibinfo{pages}{137--142}
  (\bibinfo{year}{2016}).

\bibitem{rahim2022cross}
\bibinfo{author}{Rahim, R.}, \bibinfo{author}{Ahmar, A.~S.} \&
  \bibinfo{author}{Hidayat, R.}
\newblock \bibinfo{title}{Cross-validation and validation set methods for
  choosing k in knn algorithm for healthcare case study}.
\newblock \emph{\bibinfo{journal}{JINAV: Journal of Information and
  Visualization}} \textbf{\bibinfo{volume}{3}}, \bibinfo{pages}{57--61}
  (\bibinfo{year}{2022}).

\bibitem{song2025demand}
\bibinfo{author}{Song, J.}, \bibinfo{author}{Xu, H.}, \bibinfo{author}{Li, J.}
  \& \bibinfo{author}{Zhang, S.}
\newblock \bibinfo{title}{Demand-driven knn classification}.
\newblock \emph{\bibinfo{journal}{Knowledge-Based Systems}}
  \bibinfo{pages}{114090} (\bibinfo{year}{2025}).

\bibitem{accikkar2024improved}
\bibinfo{author}{A{\c{c}}{\i}kkar, M.} \& \bibinfo{author}{Tokg{\"o}z, S.}
\newblock \bibinfo{title}{An improved knn classifier based on a novel weighted
  voting function and adaptive k-value selection}.
\newblock \emph{\bibinfo{journal}{Neural Computing and Applications}}
  \textbf{\bibinfo{volume}{36}}, \bibinfo{pages}{4027--4045}
  (\bibinfo{year}{2024}).

\bibitem{9802923}
\bibinfo{author}{Zhang, S.}, \bibinfo{author}{Li, J.} \& \bibinfo{author}{Li,
  Y.}
\newblock \bibinfo{title}{Reachable distance function for knn classification}.
\newblock \emph{\bibinfo{journal}{IEEE Transactions on Knowledge and Data
  Engineering}} \textbf{\bibinfo{volume}{35}}, \bibinfo{pages}{7382--7396}
  (\bibinfo{year}{2023}).

\bibitem{ccetin2025new}
\bibinfo{author}{{\c{C}}etin, A.~{\.I}.} \&
  \bibinfo{author}{B{\"u}y{\"u}kl{\"u}, A.~H.}
\newblock \bibinfo{title}{A new approach to k-nearest neighbors distance
  metrics on sovereign country credit rating}.
\newblock \emph{\bibinfo{journal}{Kuwait Journal of Science}}
  \textbf{\bibinfo{volume}{52}}, \bibinfo{pages}{100324}
  (\bibinfo{year}{2025}).

\bibitem{zhang2020cost}
\bibinfo{author}{Zhang, S.}
\newblock \bibinfo{title}{Cost-sensitive knn classification}.
\newblock \emph{\bibinfo{journal}{Neurocomputing}}
  \textbf{\bibinfo{volume}{391}}, \bibinfo{pages}{234--242}
  (\bibinfo{year}{2020}).

\bibitem{10192056}
\bibinfo{author}{Liu, H.} \emph{et~al.}
\newblock \bibinfo{title}{Refining codes for locality sensitive hashing}.
\newblock \emph{\bibinfo{journal}{IEEE Transactions on Knowledge and Data
  Engineering}} \textbf{\bibinfo{volume}{36}}, \bibinfo{pages}{1274--1284}
  (\bibinfo{year}{2024}).

\bibitem{akhil2025zonal}
\bibinfo{author}{Akhil, A.} \& \bibinfo{author}{Sivashankar, G.}
\newblock \emph{\bibinfo{title}{Zonal hnsw: Scalable approximate nearest
  neighbor search for billion-scale datasets}}, \bibinfo{pages}{1663--1670}
  (\bibinfo{organization}{IEEE}, \bibinfo{year}{2025}).

\bibitem{8594636}
\bibinfo{author}{Malkov, Y.~A.} \& \bibinfo{author}{Yashunin, D.~A.}
\newblock \bibinfo{title}{Efficient and robust approximate nearest neighbor
  search using hierarchical navigable small world graphs}.
\newblock \emph{\bibinfo{journal}{IEEE Transactions on Pattern Analysis and
  Machine Intelligence}} \textbf{\bibinfo{volume}{42}},
  \bibinfo{pages}{824--836} (\bibinfo{year}{2020}).

\bibitem{7898482}
\bibinfo{author}{Zhang, S.}, \bibinfo{author}{Li, X.}, \bibinfo{author}{Zong,
  M.}, \bibinfo{author}{Zhu, X.} \& \bibinfo{author}{Wang, R.}
\newblock \bibinfo{title}{Efficient knn classification with different numbers
  of nearest neighbors}.
\newblock \emph{\bibinfo{journal}{IEEE Transactions on Neural Networks and
  Learning Systems}} \textbf{\bibinfo{volume}{29}}, \bibinfo{pages}{1774--1785}
  (\bibinfo{year}{2018}).

\bibitem{9269360}
\bibinfo{author}{Bian, Z.}, \bibinfo{author}{Vong, C.~M.},
  \bibinfo{author}{Wong, P.~K.} \& \bibinfo{author}{Wang, S.}
\newblock \bibinfo{title}{Fuzzy knn method with adaptive nearest neighbors}.
\newblock \emph{\bibinfo{journal}{IEEE Transactions on Cybernetics}}
  \textbf{\bibinfo{volume}{52}}, \bibinfo{pages}{5380--5393}
  (\bibinfo{year}{2022}).

\bibitem{zeng2021novelty}
\bibinfo{author}{Zeng, F.}, \bibinfo{author}{He, Z.} \& \bibinfo{author}{Zhang,
  W.}
\newblock \bibinfo{title}{A novelty detection algorithm in the presence of
  noise}.
\newblock \emph{\bibinfo{journal}{Journal of Computer-Aided Design \& Computer
  Graphics}} \textbf{\bibinfo{volume}{33}}, \bibinfo{pages}{682--693}
  (\bibinfo{year}{2021}).

\bibitem{1384978}
\bibinfo{author}{Fei-Fei, L.}, \bibinfo{author}{Fergus, R.} \&
  \bibinfo{author}{Perona, P.}
\newblock \emph{\bibinfo{title}{Learning generative visual models from few
  training examples: An incremental bayesian approach tested on 101 object
  categories}}, \bibinfo{pages}{178--178} (\bibinfo{year}{2004}).

\bibitem{duygulu2002object}
\bibinfo{author}{Duygulu, P.}, \bibinfo{author}{Barnard, K.},
  \bibinfo{author}{de~Freitas, J.~F.} \& \bibinfo{author}{Forsyth, D.~A.}
\newblock \emph{\bibinfo{title}{Object recognition as machine translation:
  Learning a lexicon for a fixed image vocabulary}}, \bibinfo{pages}{97--112}
  (\bibinfo{organization}{Springer}, \bibinfo{year}{2002}).

\bibitem{lee2025novel}
\bibinfo{author}{Lee, S.-H.} \& \bibinfo{author}{Park, C.-M.}
\newblock \bibinfo{title}{A novel shape classification approach based on branch
  length similarity entropy}.
\newblock \emph{\bibinfo{journal}{IEEE Access}}  (\bibinfo{year}{2025}).

\bibitem{riaz2025exploring}
\bibinfo{author}{Riaz, A.}, \bibinfo{author}{Abdulkader, O.},
  \bibinfo{author}{Ikram, M.~J.} \& \bibinfo{author}{Jan, S.}
\newblock \bibinfo{title}{Exploring topic modelling: a comparative analysis of
  traditional and transformer-based approaches with emphasis on coherence and
  diversity}.
\newblock \emph{\bibinfo{journal}{International Journal of Electrical and
  Computer Engineering (IJECE)}} \textbf{\bibinfo{volume}{15}},
  \bibinfo{pages}{1933--1948} (\bibinfo{year}{2025}).

\bibitem{fan2025palm}
\bibinfo{author}{Fan, Z.} \emph{et~al.}
\newblock \bibinfo{title}{Palm: A dataset and baseline for learning
  multi-subject hand prior}.
\newblock \emph{\bibinfo{journal}{arXiv preprint arXiv:2511.05403}}
  (\bibinfo{year}{2025}).

\bibitem{halder2024enhancing}
\bibinfo{author}{Halder, R.~K.}, \bibinfo{author}{Uddin, M.~N.},
  \bibinfo{author}{Uddin, M.~A.}, \bibinfo{author}{Aryal, S.} \&
  \bibinfo{author}{Khraisat, A.}
\newblock \bibinfo{title}{Enhancing k-nearest neighbor algorithm: a
  comprehensive review and performance analysis of modifications}.
\newblock \emph{\bibinfo{journal}{Journal of Big Data}}
  \textbf{\bibinfo{volume}{11}}, \bibinfo{pages}{113} (\bibinfo{year}{2024}).

\bibitem{men2025parallel}
\bibinfo{author}{Men, Z.}, \bibinfo{author}{Shen, Z.}, \bibinfo{author}{Gu, Y.}
  \& \bibinfo{author}{Sun, Y.}
\newblock \bibinfo{title}{Parallel kd-tree with batch updates}.
\newblock \emph{\bibinfo{journal}{Proceedings of the ACM on Management of
  Data}} \textbf{\bibinfo{volume}{3}}, \bibinfo{pages}{1--26}
  (\bibinfo{year}{2025}).

\bibitem{jodas2023pl}
\bibinfo{author}{Jodas, D.~S.}, \bibinfo{author}{Passos, L.~A.},
  \bibinfo{author}{Adeel, A.} \& \bibinfo{author}{Papa, J.~P.}
\newblock \bibinfo{title}{Pl-knn: A python-based implementation of a
  parameterless k-nearest neighbors classifier}.
\newblock \emph{\bibinfo{journal}{Software impacts}}
  \textbf{\bibinfo{volume}{15}}, \bibinfo{pages}{100459}
  (\bibinfo{year}{2023}).

\bibitem{kumbure2025generalizing}
\bibinfo{author}{Kumbure, M.~M.} \& \bibinfo{author}{Luukka, P.}
\newblock \bibinfo{title}{Generalizing fuzzy k-nearest neighbor classifier
  using an owa operator with a rim quantifier}.
\newblock \emph{\bibinfo{journal}{Expert Systems with Applications}}
  \bibinfo{pages}{127795} (\bibinfo{year}{2025}).

\bibitem{accikkar2025improving}
\bibinfo{author}{A{\c{c}}{\i}kkar, M.} \& \bibinfo{author}{Tokg{\"o}z, S.}
\newblock \bibinfo{title}{Improving multi-class classification: scaled
  extensions of harmonic mean-based adaptive k-nearest neighbors}.
\newblock \emph{\bibinfo{journal}{Applied Intelligence}}
  \textbf{\bibinfo{volume}{55}}, \bibinfo{pages}{168} (\bibinfo{year}{2025}).

\end{thebibliography}

\section*{Acknowledgments}
The authors acknowledge Zhejiang University, Central South University, and Guangxi Normal University for providing the institutional support and infrastructure necessary to conduct this research.

\section*{Funding}
This work was supported by the Postdoctoral Fellowship Program of CPSF (Grant No. GZC20251062) and the China Postdoctoral Science Foundation (Grant No. 2025M781521).

\section*{Author Contributions}
S.Z. conceived the project and secured the funding. S.Z. supervised the research. J.L. and H.X. developed and trained the $k$NN-Graph framework, and performed the formal analysis. J.L. wrote the initial draft of the manuscript. All authors (J.L., H.X., and S.Z.) discussed the results and contributed to the writing, review, and editing of the final manuscript.

\section*{Competing Interests}
The authors declare no competing interests.

\newpage

\begin{table}[!ht]
	\centering
	\caption{Average classification accuracy over 10 experimental runs (\%). The best-performing results for each dataset are highlighted in bold.}
	\label{tab:accuracy}
	\begin{tabular}{lccccccc}
		\toprule
		Method & Binalpha & Caltech & Corel & Mpeg & News & Palm & Mean \\
		\midrule
		CV-$k$NN   & 66.74 & 58.25 & 32.50 & 80.14 & 92.87 & 99.65 & 71.69 \\
		E$k$NN     & 61.40 & 59.84 & 31.12 & 67.71 & 87.13 & 99.25 & 67.74 \\
		KD-Tree    & 65.74 & 56.98 & 32.54 & 73.21 & 92.92 & 99.60 & 70.15 \\
		OWA$k$NN   & 68.38 & 60.57 & 32.92 & 83.64 & 91.84 & 99.70 & 72.84 \\
		PL$k$NN    & 65.23 & 52.09 & 32.56 & 82.29 & 69.47 & 98.90 & 66.76 \\
		XHMA$k$NN  & 67.87 & 58.93 & 32.80 & 72.64 & 63.50 & 99.75 & 65.92 \\
		$k$*tree   & 66.81 & 59.41 & 32.52 & 80.21 & 92.59 & 99.55 & 71.85 \\
		O$k$NN     & 67.10 & 59.18 & 32.60 & 81.21 & 93.50 & 99.70 & 72.22 \\
		\textbf{$k$NN-Graph} & \textbf{69.65} & \textbf{61.09} & \textbf{33.62} & \textbf{84.57} & \textbf{93.80} & \textbf{99.80} & \textbf{73.76} \\
		\bottomrule
	\end{tabular}
\end{table}

\newpage

\begin{table}[!ht]
	\centering
	\caption{Average macro-precision over 10 experimental runs (\%). The best-performing results for each dataset are highlighted in bold.}
	\label{tab:macro_precision}
	\begin{tabular}{lccccccc}
		\toprule
		Method & Binalpha & Caltech & Corel & Mpeg & News & Palm & Mean \\
		\midrule
		CV-$k$NN   & 70.80 & 51.29 & 33.37 & 81.87 & 92.97 & 99.77 & 71.68 \\
		E$k$NN     & 64.57 & 53.94 & 28.68 & 61.86 & 89.83 & 99.50 & 66.40 \\
		KD-Tree    & 73.67 & 49.64 & \textbf{35.92} & 70.87 & 92.99 & 99.72 & 70.47 \\
		OWA$k$NN   & 72.16 & 52.95 & 34.26 & 84.58 & 91.94 & 99.80 & 72.62 \\
		PL$k$NN    & 70.29 & 45.61 & 34.25 & 83.66 & 75.88 & 99.08 & 68.13 \\
		XHMA$k$NN  & 74.00 & 53.39 & 35.75 & 70.07 & 80.61 & 99.82 & 68.94 \\
		$k$*tree   & 73.16 & 53.03 & 34.04 & 80.48 & 92.68 & 99.70 & 72.18 \\
		O$k$NN     & 72.53 & 52.78 & 33.54 & 81.45 & 93.59 & 99.80 & 72.28 \\
		\textbf{$k$NN-Graph} & \textbf{74.97} & \textbf{54.91} & 34.56 & \textbf{85.70} & \textbf{93.87} & \textbf{99.87} & \textbf{73.98} \\
		\bottomrule
	\end{tabular}
\end{table}

\newpage

\begin{table}[!ht]
	\centering
	\caption{Average inference time (in seconds). The best-performing (shortest) results for each dataset are highlighted in bold.}
	\label{tab:inference_time}
	\begin{tabular}{lccccccc}
		\toprule
		Method & Binalpha & Caltech & Corel & Mpeg & News & Palm & Mean \\
		\midrule
		CV-$k$NN   & 1.5054 & 9.0065 & 5.7486 & 25.5809 & 3409.1249 & 1.5079 & 575.4124 \\
		E$k$NN     & 1.0353 & 86.1351 & 2.4017 & 6.6196 & 18.4378 & 5.1370 & 19.9611 \\
		KD-Tree    & 1.5037 & 56.6143 & 16.7587 & 7.5409 & 95.0954 & 3.7753 & 30.2147 \\
		OWA$k$NN   & 2.4849 & 43.9984 & 9.9134 & 62.7050 & 188.2357 & 0.8738 & 51.3685 \\
		PL$k$NN    & 0.2833 & 9.5861 & 6.1142 & 9.6319 & 116.6768 & 0.4947 & 23.7978 \\
		XHMA$k$NN  & 0.1983 & 7.0775 & 2.7548 & 3.2129 & 44.6574 & 0.2523 & 9.6922 \\
		$k$*tree   & 0.0404 & 0.6099 & 0.1705 & 0.1408 & 10.7403 & 0.1208 & 1.9705 \\
		O$k$NN     & 0.0302 & 0.5353 & 0.1511 & 0.0601 & 0.5618 & 0.1108 & 0.2370 \\
		\textbf{$k$NN-Graph} & \textbf{0.0022} & \textbf{0.0097} & \textbf{0.0094} & \textbf{0.0573} & \textbf{0.5233} & \textbf{0.0024} & \textbf{0.1007} \\
		\bottomrule
	\end{tabular}
\end{table}

\newpage

\begin{table}[htbp]
	\centering
	\caption{Average offline training time of the proposed $k$NN-Graph framework, along with the programming language and hardware specifications.}
	\label{tab:training_time}
	\begin{tabular}{lrl|l} 
		\toprule
		Dataset & Training Time (s) & \hspace{1em} & Language \& Hardware Specifications \\
		\midrule
		Binalpha & 18.47 \hspace{1.5em} && Language: MATLAB R2024b \\
		Caltech  & 8635.40 \hspace{1.5em} && OS: Windows 10 \\
		Corel    & 1672.80 \hspace{1.5em} && CPU: AMD Ryzen 9 3900X 12-Core (3.79 GHz) \\
		Mpeg     & 47.65 \hspace{1.5em}   && RAM: 32.0 GB \\
		News     & 1009.56 \hspace{1.5em} && GPU: NVIDIA GeForce GTX 1060 (6 GB) \\
		Palm     & 98.50 \hspace{1.5em}   && Storage: 4 TB Disk \\
		\bottomrule
	\end{tabular}
\end{table}

\newpage

\begin{table}[htbp]
	\centering
	\caption{Comparison of Inference Time Complexity.}
	\label{tab:complexity}
	\begin{tabular}{lccc}
		\toprule
		Method & \makecell[c]{Search \\ Complexity} & \makecell[c]{Voting/Ranking \\ Cost} & \makecell[c]{Total \\ Complexity} \\
		\midrule
		Brute-force $k$NN & $\mathcal{O}(n \cdot d)$ & $\mathcal{O}(n \log k)$ & $\mathcal{O}(n \cdot d)$ \\
		\addlinespace 
		KD-Tree (High Dim.) & $\mathcal{O}(n \cdot d)$ & $\mathcal{O}(k \log k)$ & $\mathcal{O}(n \cdot d)$ \\
		\addlinespace
		Standard HNSW & $\mathcal{O}(\bar{m} \log n \cdot d)$ & $\mathcal{O}(k \log k)$ & $\mathcal{O}(\bar{m} \log n \cdot d + k \log k)$ \\
		\addlinespace
		$k$NN-Graph (Ours) & $\mathcal{O}(\bar{m} \log n \cdot d)$ & $0$ & $\mathcal{O}(\bar{m} \log n \cdot d)$ \\
		\bottomrule
	\end{tabular}
\end{table}

\newpage

\begin{figure}[!ht]
	\begin{center}
		\vspace{-1mm}
		\subfigure{\scalebox{0.4}{\includegraphics{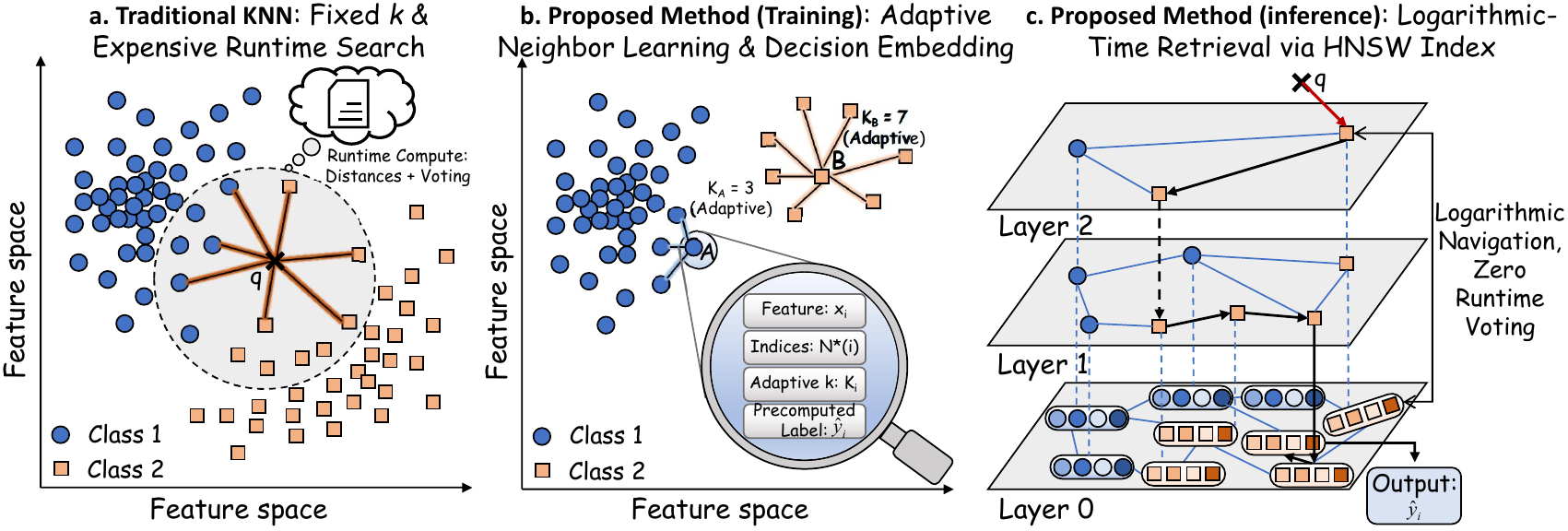}}}		
		\caption{\textbf{Conceptual comparison between traditional $k$NN and the adaptive graph model.} \textbf{a} Traditional $k$NN classification. The standard algorithm applies a global, fixed neighborhood size ($k$) around a query point ($q$), requiring runtime distance computations (indicated by orange solid lines) and majority voting. Blue circles and orange squares represent Class 1 and Class 2 samples, respectively. \textbf{b} Adaptive offline training. The proposed method evaluates local data density to dynamically assign a node-specific adaptive $k$ (for example, $K_\mathrm{A}=3$ for node A and $K_\mathrm{B}=7$ for node B). A consensus label ($\hat{y}_i$) is pre-computed and structurally embedded into each node along with its feature vector ($\mathbf{x}_i$), neighbor indices ($N^*(i)$), and the assigned adaptive $k$ ($K_i$), as detailed in the magnifying glass inset. \textbf{c} Logarithmic-time inference. Utilizing a Hierarchical Navigable Small World (HNSW) index comprising multiple layers (Layer 0 to Layer 2), an online query ($\mathbf{x}_q$) routes through descending graph layers (indicated by the red entry arrow and black routing arrows) to locate the nearest pre-computed node. The embedded label ($\hat{y}_i$) is directly retrieved as the final output, removing the requirement for real-time distance sorting and voting. }
		\label{fig1}
	\end{center}

\end{figure}

\begin{figure*}[!ht]
	\begin{center}
		\vspace{-1mm}
		\subfigure{\scalebox{0.39}{\includegraphics{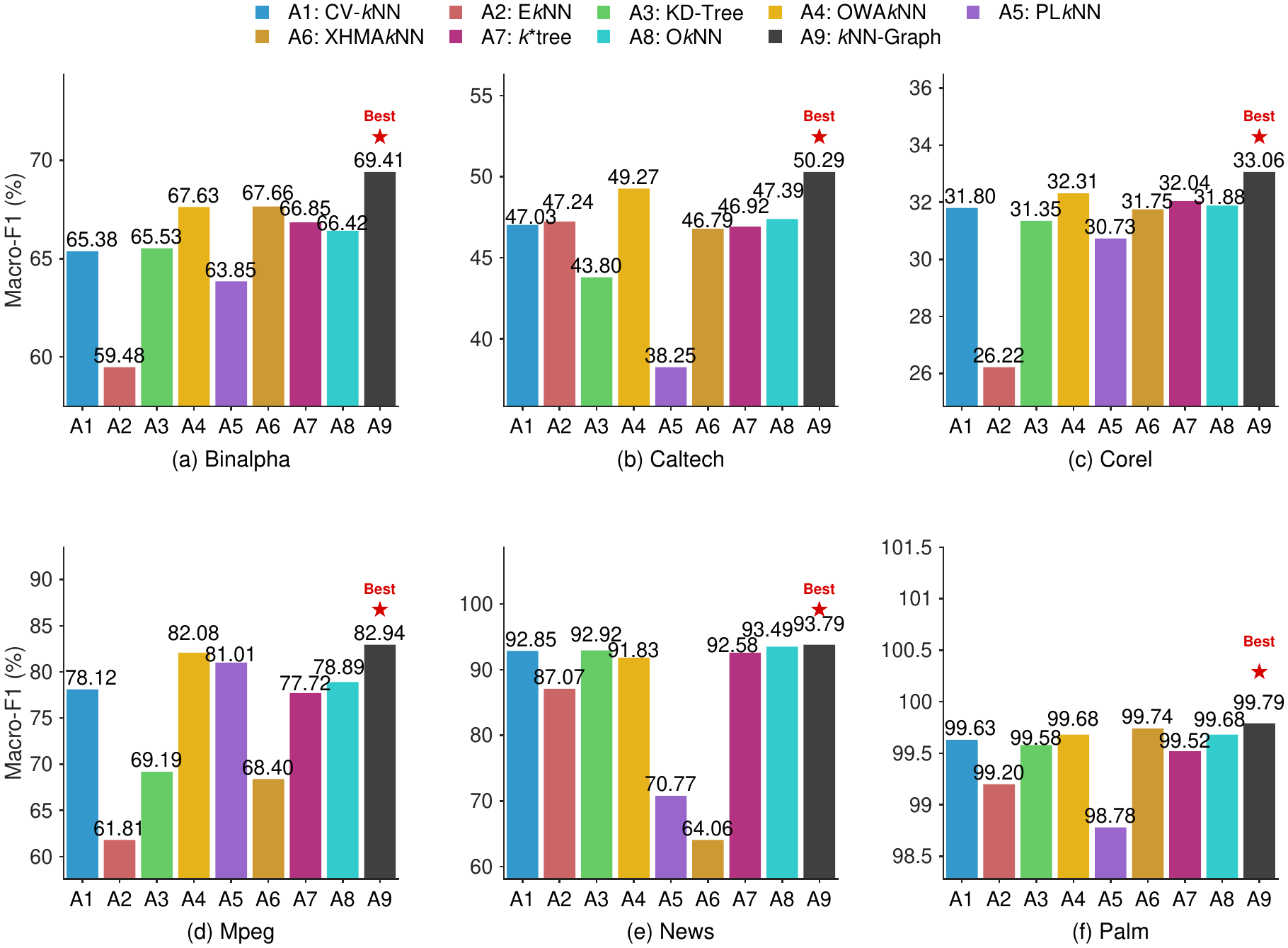}}}	
		
		\caption{\textbf{Algorithmic performance comparison using Average Macro-F1 Score.} \textbf{a}--\textbf{f}, Comparison of Average Macro-F1 Score (\%) between the proposed $k$NN-Graph framework (A9, dark grey bars) and eight established baseline algorithms (A1--A8) across six diverse datasets: Binalpha (\textbf{a}), Caltech (\textbf{b}), Corel (\textbf{c}), Mpeg (\textbf{d}), News (\textbf{e}), and Palm (\textbf{f}). Bar heights and the numerical values displayed above each bar represent the mean Macro-F1 Score computed over ten independent runs (per the common title). For baseline algorithms A1--A8, varied colors mirror the color coding provided in the main legend. In each panel, top performance is marked by a red star ($\star$) and labelled Best.}	
		\label{fig2}
	\end{center}
\end{figure*} 

\begin{figure*}[!ht]
	\begin{center}
		\vspace{-1mm}
		\subfigure{\scalebox{0.35}{\includegraphics{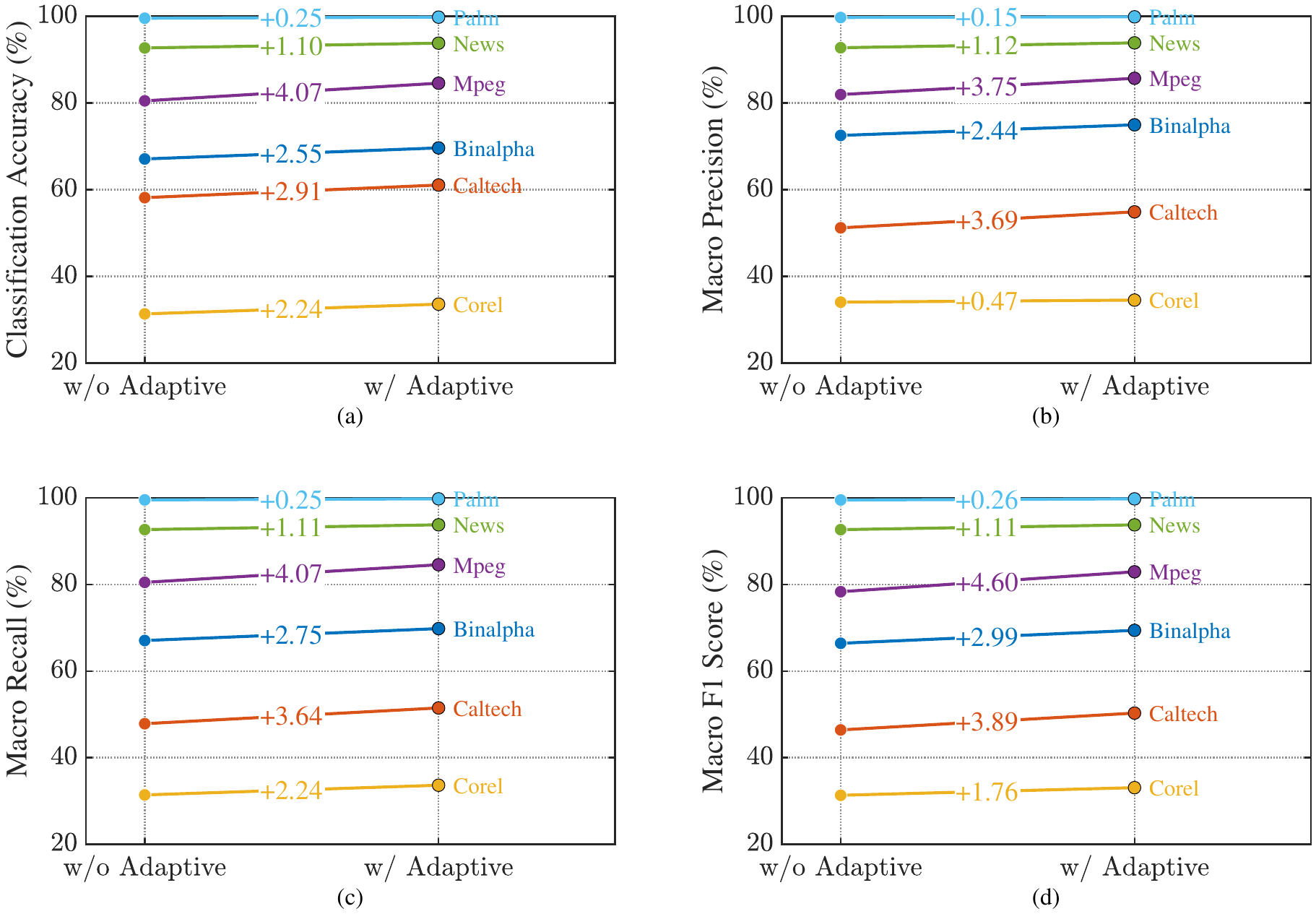}}}	
		
		\caption{\textbf{Ablation analysis of the Adaptive Neighborhood Learning mechanism.} \textbf{a}--\textbf{d}, Pairwise performance comparisons between the baseline static Hierarchical Navigable Small World (HNSW) graph (w/o Adaptive) and the proposed method (w/ Adaptive) across six benchmark datasets. The panels present classification accuracy (\textbf{a}), macro precision (\textbf{b}), macro recall (\textbf{c}), and macro F1-score (\textbf{d}). Colours denote the specific benchmark datasets as labelled. Values annotated on the connecting lines indicate the absolute percentage point gains ($+\Delta$) for each dataset. The positive shifts across all metrics indicate that jointly optimizing adaptive neighbor counts and dynamic weights enhances the model's discriminative capacity over a static graph structure.}
		\label{fig3}	
	\end{center}
	
\end{figure*} 

\begin{figure*}[!ht]
	\begin{center}
		\vspace{-1mm}
		\subfigure{\scalebox{0.23}{\includegraphics{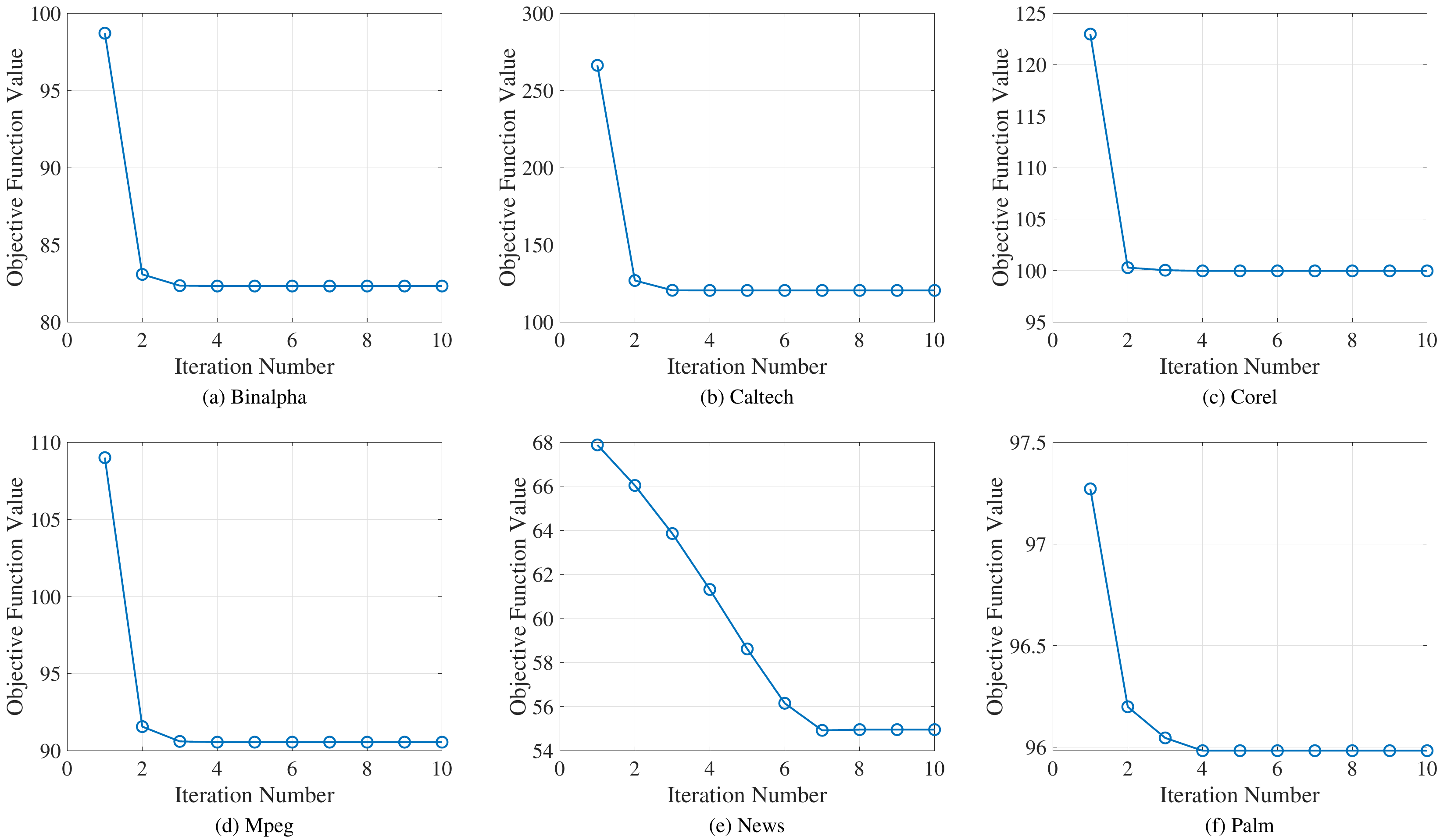}}}

		\caption{\textbf{Convergence analysis of the proposed objective function.} \textbf{a}--\textbf{f}, Evolution of the objective function value across successive iterations for the proposed method on six benchmark datasets: Binalpha (\textbf{a}), Caltech (\textbf{b}), Corel (\textbf{c}), Mpeg (\textbf{d}), News (\textbf{e}), and Palm (\textbf{f}). The blue lines with circular markers trace the optimization trajectory. For the majority of the evaluated datasets (\textbf{a}--\textbf{d} and \textbf{f}), the objective function exhibits a rapid initial reduction, achieving stable convergence within the first three to four iterations. Notably, for the high-dimensional and sparse News dataset (\textbf{e}), the objective function demonstrates a consistent, monotonic decrease and strictly converges within 8 iterations. These distinct trajectories objectively confirm the stable convergence properties of the proposed method across varying data complexities.}	
		\label{fig4}
	\end{center}
\end{figure*}

\begin{figure*}[ht]
	\centering
	\includegraphics[width=1\textwidth]{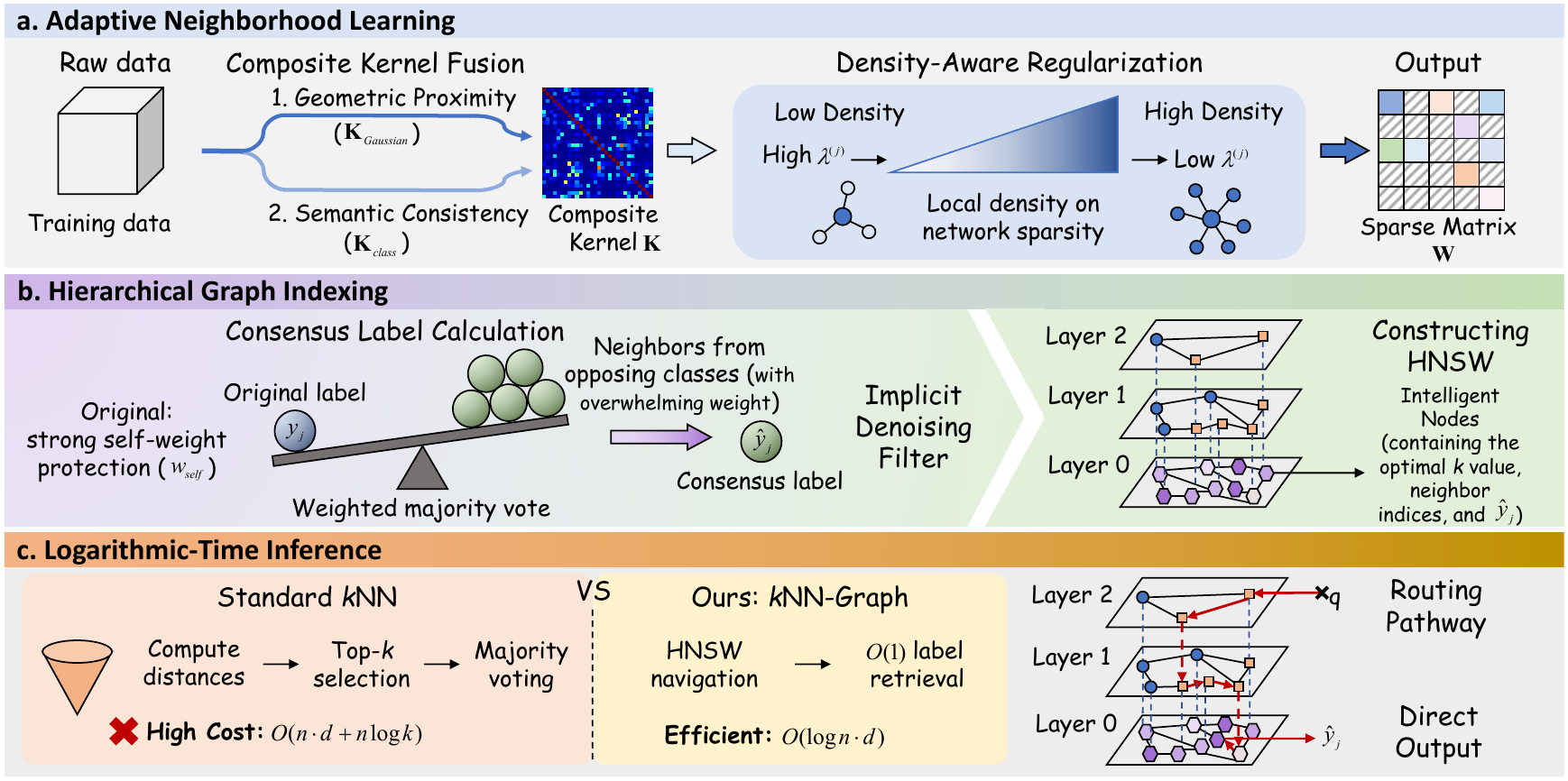}
	\caption{\textbf{Architecture of the $k$NN-Graph framework.} The framework decouples neighborhood learning from the inference phase across three stages. \textbf{a} Adaptive Neighborhood Learning. Training data are mapped into a composite kernel ($\mathbf{K}$) space capturing geometric proximity ($\mathbf{K}_{\text{Gaussian}}$) and semantic consistency ($\mathbf{K}_{\text{class}}$). A density-aware regularization mechanism determines the sparsity parameter ($\lambda^{(j)}$) based on local density, yielding the sparse representation matrix $\mathbf{W}$. \textbf{b} Hierarchical Graph Indexing. A Hierarchical Navigable Small World (HNSW) structure is constructed using precomputed nodes. For each node, a weighted consensus label ($\hat{y}_j$) is calculated. This process functions as an implicit denoising filter, utilizing a self-weight parameter ($w_{\text{self}}$) to balance against neighboring weights and mitigate the noise of the original label ($y_j$). \textbf{c} Logarithmic-Time Inference. In contrast to standard $k$-nearest neighbors ($k$NN) methods that require exhaustive runtime distance computations, top-$k$ sorting, and majority voting (left), the $k$NN-Graph (right) employs greedy routing (indicated by solid and dashed red arrows) through the HNSW hierarchy for a query point ($\mathbf{x}_q$) to locate the nearest node. The precomputed consensus label ($\hat{y}_j$) is directly retrieved, achieving an inference complexity of $O(\log n \cdot d)$, where $n$ is the number of training samples and $d$ is the feature dimension.}
	\label{fig5}

\end{figure*}



\end{document}